\newtheorem{assumption}{Assumption}
\newtheorem{lemma}{Lemma}
\newtheorem{theorem}{Theorem}
\title{\LARGE \bf
A Zeroth-Order Momentum Method  for Risk-Averse \\ Online Convex Games
}
\author{Zifan Wang, Yi Shen, Zachary I. Bell, Scott Nivison, Michael M. Zavlanos, and Karl H. Johansson
% <-this % stops a space
\thanks{*This work is supported in part by AFOSR under award \#FA9550-19-1-0169
and by NSF under award CNS-1932011.}% <-this % stops a space
\thanks{Zifan Wang and Karl H. Johansson are with Division of Decision and Control Systems, School of Electrical Enginnering and Computer Science, KTH Royal Institute of Technology, and also with Digital Futures, SE-10044 Stockholm, Sweden. Email: \{zifanw,kallej\}@kth.se.}
\thanks{Yi Shen and Michael M. Zavlanos are with the Department of Mechanical Engineering and Materials Science, Duke University, Durham, NC, USA. Email: \{yi.shen478, michael.zavlanos\}@duke.edu}%
\thanks{Zachary I. Bell and Scott Nivison are with the Air Force Research Laboratory, Eglin AFB, FL, USA. Email: \{scott.nivison, zachary.bell.10\}@us.af.mil.}%
}
\begin{document}

\maketitle
\thispagestyle{empty}
\pagestyle{empty}

%%%%%%%%%%%%%%%%%%%%%%%%%%%%%%%%%%%%%%%%%%%%%%%%%%%%%%%%%%%%%%%%%%%%%%%%%%%%%%%%
\begin{abstract}
We consider risk-averse learning in repeated unknown games where the goal of the agents is to minimize their individual risk of incurring significantly high cost. Specifically, the agents use the conditional value at risk (CVaR) as a risk measure and rely on bandit feedback in the form
of the cost values of the selected actions at every episode to estimate their CVaR values and update their actions. A major challenge in using bandit feedback to estimate CVaR is that the agents can only access their own cost values, which, however, depend on the actions of all agents. To address this challenge, we propose a new risk-averse learning algorithm with momentum that utilizes the full historical information on the cost values. We show that this algorithm achieves sub-linear regret and matches the best known algorithms in the literature. We provide numerical experiments for a Cournot game that show that our method outperforms existing methods.

\end{abstract}

%%%%%%%%%%%%%%%%%%%%%%%%%%%%%%%%%%%%%%%%%%%%%%%%%%%%%%%%%%%%%%%%%%%%%%%%%%%%%%%%
\section{Introduction}
In online convex games \cite{shalev2006convex,gordon2008no}, agents interact with each other in the same environment in order to minimize their individual cost functions. Many applications, including traffic routing \cite{sessa2019no} and online marketing \cite{shi2019no}, among many others \cite{hazan2019introduction}, can be modeled as online convex games. The individual cost functions depend on the joint decisions of all agents and are typically unknown but can be accessed via querying. Using this information, the agents can sequentially select the best actions that minimize their expected accumulated costs. 
This can be done using online learning algorithms whose performance is typically measured using the notions of regret \cite{hazan2019introduction} that quantify the gap between the agents' online decisions and the best decisions in hindsight. Many such online learning algorithms have been recently proposed, and have been shown to achieve sub-linear regret, which indicates that optimal decisions are eventually selected; see e.g., \cite{shalev2006convex,gordon2008no,shalev2011online}.
In this paper, here we focus on online convex games for high-stakes applications, where decisions that minimize the expected cost functions are not necessarily desirable. For example, in portfolio management, a selection of assets with the highest expected return is not necessarily desirable since it may have high volatility that can lead to catastrophic losses. To capture unexpected and catastrophic outcomes, risk-averse criteria have been widely proposed in place of the expectation, such as the Sharpe Ratio \cite{sharpe1994sharpe} and Conditional Value at Risk (CVaR) \cite{artzner1999coherent}. 

In this paper, we consider online convex games with risk-averse agents that employ CVaR as risk measure.
We assume that only cost function evaluations at selected decisions are accessible. 
Despite the practical utility of CVaR as a measure of risk in high-stakes applications, the theoretical analysis of risk-averse learning methods that employ the CVaR is limited, mainly since the CVaR values must be estimated from the distributions of the unknown cost functions. To avoid this problem, \cite{rockafellar2000optimization} proposes a variational definition of CVaR that allows to reformulate the computation of CVaR values into an optimization problem by introducing additional variables. 
Building on this reformulation, many risk-averse learning methods have been proposed, including \cite{soma2020statistical,cardoso2019risk,curi2019adaptive}, and their convergence rates have been analysed. Specifically, \cite{soma2020statistical} provides performance guarantees for stochastic gradient descent-type algorithms in risk-averse statistical learning problems; \cite{cardoso2019risk} addresses risk-averse online convex optimization problems with bandit feedback; and \cite{curi2019adaptive} proposes an adaptive sampling strategy for CVaR learning, and transforms the supervised learning problem into a zero-sum game between the sampler and the learner.
However, all the above works focus on single-agent learning problems. 

To the best of our knowledge, multi-agent risk-averse learning problems have not been analyzed in the literature, with only a few exceptions \cite{tamkin2019distributionally}, \cite{wang2022risk}. The work in \cite{tamkin2019distributionally} solves multi-armed bandit problems with finite actions, which are different from the convex games with continuous actions considered here.
Most closely related to this paper is our recent work in \cite{wang2022risk} proposing a new risk-averse learning algorithm for online convex games, where the agents utilize the cost evaluations at each time step to update their actions and achieve no-regret learning with high probability. 
To improve the learning rate of this algorithm, \cite{wang2022risk} also proposes two
variance reduction methods on the CVaR estimates and gradient estimates, respectively.
In this paper, we build on this work to significantly improve the performance of the algorithm using a notion of momentum that combines these variance reduction techniques. 
Specifically, the proposed new algorithm reuses samples of the cost functions from the full history of data while appropriately decreasing the weights of out-of-date samples.
The idea is that, using more samples, the unknown distribution of the cost functions needed to estimate the CVaR values can be better estimated. To do so, at each time step, the proposed  algorithm estimates the distribution of the cost functions using both
the immediate empirical distribution estimate and the distribution estimate from the previous time step, which summarizes all past cost values. 
Then, to construct the gradient estimates, more weight is allocated to more recent  information, similar to momentum gradient descent methods \cite{qian1999momentum}. 
To reduce the variance of the gradient estimate, we employ residual feedback \cite{zhang2020boosting}, which uses previous gradient estimates to update the current one.

Although \cite{wang2022risk} showed that reusing samples from the last iteration or residual feedback can individually improve the performance of the algorithm, it also identified that the combination of sample reuse and residual feedback is an unexplored and theoretically nontrivial problem. 
In this work, we combine sample reuse and residual feedback by proposing a novel zeroth-order momentum method that reuses all historical samples, compared to samples from only the last iteration. 
An additional important contribution of this work is that the proposed zeroth-order momentum method subsumes Algorithm 3 in \cite{wang2022risk} as a special case by appropriately choosing the momentum parameter. We show that our proposed method theoretically matches the best known result in \cite{wang2022risk} but empirically outperforms  other existing methods.

The rest of the paper is organized as follows. In Section~\ref{sec:problem}, we formally define the problem and provide some preliminary results. In Section~\ref{sec:preliminary}, we summarize results in \cite{wang2022risk} that are needed to develop our proposed method. In Section~\ref{sec:method}, we present our proposed method and analyze its regret. In Section~\ref{sec:simulation}, we numerically verify our method using a Cournot game example. Finally, in Section~\ref{sec:conclusion}, we conclude the paper.
\section{Problem definition}\label{sec:problem}
We consider a repeated game $\mathcal{G}$  with $N$ risk-averse agents. Each agent selects an action $x_i$ from the convex set $\mathcal{X}_i \subseteq \mathbb{R}^{d_i}$, and then receives a stochastic cost $J_i(x_i,x_{-i},\xi_i): \mathcal{X} \times \Xi_i \rightarrow \mathbb{R}$, where $x_{-i}$ represents all agents' actions except for agent $i$, and $\mathcal{X}=\Pi_{i=1}^N\mathcal{X}_i$ is the joint action space. 
We sometimes instead write $J_i(x,\xi_i)$ for ease of notation, where $x=(x_i,x_{-i})$ is the concatenated vector of all agents' actions. The cost $J_i(x_i,x_{-i},\xi_i)$ is stochastic, as $\xi_i \in \Xi_i$ is a stochastic variable.
We assume that the diameter of the convex set $\mathcal{X}_i$ is bounded by $D_x$ for all $i=1,\ldots,N$.
In what follows, we make the following assumptions on the class of games we consider in this paper.
\begin{assumption}\label{assump:J_convex}
The function $J_i(x_i,x_{-i},\xi_i)$ is convex in $x_i$ for every $\xi_i \in \Xi_i$ and bounded  by $U$, i.e., $|J_i(x,\xi_i)|\leq U$, for all $i=1,\ldots,N$.
\end{assumption}
\begin{assumption}\label{assump:J_Lips}
$J_i(x,\xi_i)$ is $L_0$-Lipschitz continuous in $x$ for every $\xi_i \in \Xi_i$, for all $i=1,\ldots,N$.
\end{assumption}
Assumptions \ref{assump:J_convex} and \ref{assump:J_Lips} are common assumptions in the analysis of online learning in games \cite{bravo2018bandit}.

The goal of the risk-averse agents is to minimize the risk of incurring significantly high costs, for possibly different risk levels. 
In this work, we utilize CVaR as the risk measure. For a given risk level $\alpha_i\in [0,1]$, CVaR is defined as the average of the $\alpha_i$ percent worst-case cost.
Specifically, we denote by $F_{i}^x(y)=\mathbb{P}\{J_i(x,\xi_i)\leq y \}$ as the cumulative distribution function (CDF) of the random cost $J_i(x,\xi_i)$ for agent $i$; and by  $J^{\alpha_i}$ the cost value at the $1-\alpha_i$ quantile of the distribution, also called the Value at Risk (VaR). 
Then, for a given risk level $\alpha_i\in [0,1]$, CVaR of the cost function $J_i(x,\xi_i)$ of agent $i$ is defined as \begin{align*} 
    C_i(x):& ={\rm{CVaR}}_{\alpha_i}[J_i(x,\xi_i)] \\
    :& = \mathbb{E}_F[J_i(x,\xi_i)|J_i(x,\xi_i)\geq J^{\alpha_i}].
\end{align*} 
Notice that the CVaR value is determined by the distribution function $F_i^x(y)$ for a given $\alpha_i$, so we sometimes write CVaR as a function of the distribution function, i.e.,  ${\rm{CVaR}}_{\alpha_i}[F_i^x]:={\rm{CVaR}}_{\alpha_i}[J_i(x,\xi_i)] $.
In addition, we assume that the agents cannot observe other agents' actions, but the only information they can observe is the cost evaluations of the jointly selected actions at each time step.

Given Assumptions 1 and 2, the following lemma lists properties of CVaR, that will be important in the analysis that follows. The proof can be found in \cite{cardoso2019risk}.
\begin{lemma}\label{lemma:cvar_property}
Given Assumptions \ref{assump:J_convex} and \ref{assump:J_Lips}, we have that $C_i(x_i,x_{-i})$ is convex in $x_i$ and $L_0$-Lipschitz continuous in $x$, for all $i=1,\ldots,N$. 
\end{lemma}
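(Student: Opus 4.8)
The plan is to base both claims on the Rockafellar--Uryasev variational representation of CVaR, which replaces the quantile-based definition by an optimization problem that is far more amenable to convexity and regularity arguments. Concretely, I would write
\begin{equation*}
C_i(x) = \min_{\tau \in \mathbb{R}} h_i(x,\tau), \qquad h_i(x,\tau) := \tau + \frac{1}{\alpha_i}\,\mathbb{E}_{\xi_i}\big[(J_i(x,\xi_i) - \tau)_+\big],
\end{equation*}
where $(z)_+ := \max\{z,0\}$. This is exactly the reformulation from \cite{rockafellar2000optimization} already invoked in the introduction, and the minimizer is attained since $J_i$ is bounded by $U$ under Assumption~\ref{assump:J_convex}.

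For convexity in $x_i$, I would argue by partial minimization. Fix $x_{-i}$. For every $\xi_i$, the map $(x_i,\tau) \mapsto J_i(x_i,x_{-i},\xi_i) - \tau$ is jointly convex, being convex in $x_i$ by Assumption~\ref{assump:J_convex} and affine in $\tau$. Composing with the convex, non-decreasing function $(\cdot)_+$ preserves joint convexity; taking the expectation over $\xi_i$ and adding the affine term $\tau$ then keeps $h_i(x,\tau)$, viewed as a function of $(x_i,\tau)$ for fixed $x_{-i}$, jointly convex. Since the partial minimization of a jointly convex function over one of its arguments is convex in the remaining argument, $C_i(x)=\min_{\tau} h_i(x,\tau)$ is convex in $x_i$.

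For the Lipschitz bound, the naive route through the primal representation and the $1$-Lipschitzness of $(\cdot)_+$ only yields the constant $L_0/\alpha_i$, so I would instead exploit the coherence of CVaR, specifically its monotonicity and translation equivariance. By Assumption~\ref{assump:J_Lips}, for any $x,x'$ and every $\xi_i$ we have $J_i(x,\xi_i) \leq J_i(x',\xi_i) + L_0\|x - x'\|$. Applying ${\rm CVaR}_{\alpha_i}$ to both sides, monotonicity together with the fact that adding a constant shifts CVaR by that same constant gives $C_i(x) \leq C_i(x') + L_0\|x-x'\|$; swapping the roles of $x$ and $x'$ yields $|C_i(x) - C_i(x')| \leq L_0\|x - x'\|$. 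Equivalently, one may invoke the dual representation $C_i(x) = \sup_{Q} \mathbb{E}_Q[J_i(x,\xi_i)]$ over measures with density bounded by $1/\alpha_i$, and bound the difference using the $Q$ that is optimal for $x$, noting that any such $Q$ is a probability measure.

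The main obstacle I anticipate is obtaining the sharp constant $L_0$ rather than the loose $L_0/\alpha_i$ in the Lipschitz claim; this is precisely where the monotonicity and translation-equivariance (or dual) viewpoint is essential, whereas the convexity assertion follows routinely once the variational form is in hand.
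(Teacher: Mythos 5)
Your proof is correct. The paper does not actually prove this lemma in-text---it defers entirely to \cite{cardoso2019risk}---and your argument (the Rockafellar--Uryasev variational form with partial minimization over $\tau$ for convexity in $x_i$, plus monotonicity and translation equivariance of ${\rm CVaR}_{\alpha_i}$ to get the sharp constant $L_0$ rather than $L_0/\alpha_i$) is exactly the standard route taken in that reference, so the two approaches coincide.
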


The following additional assumption on the variation of the CDFs is needed for the analysis that follows.
\begin{assumption}\label{assump_CDF_lips}
Let $F^w_i(y)= \mathbb{P}\{ J_i(w,\xi_i)\leq y\}$ and $F^v_i(y)= \mathbb{P}\{ J_i(v,\xi_i)\leq y\}$. There exist a constant $C_0>0$ such that 
\begin{align*}
    \mathop{\rm{sup}}_{y} |F^w_i(y)-F^v_i(y)| \leq C_0  \left\|w -v \right\|.
\end{align*}
\end{assumption}
Assumption \ref{assump_CDF_lips} states that the difference between two CDFs can be bounded by the difference between the corresponding two action profiles. It is less restrictive than Assumption~3 in \cite{wang2022risk} since it does not depend on the algorithm but only on the cost functions.

A common measure of the ability of the agents to learn online optimal decisions that minimize their individual risk-averse objective functions $C_i(x)$ is the algorithm regret, which is defined as the difference between the actual rewards and best rewards that the agent could have achieved by playing the single best action in hindsight.
Suppose the action sequences of agent~$i$ and the other agents in the team are $\{\hat{x}_{i,t} \}_{t=1}^T$ and $\{\hat{x}_{-i,t} \}_{t=1}^T$, respectively. Then, we define the regret of agent~$i$ as
\begin{align*}
    {\rm{R}}_{C_i}(T) = \sum_{t=1}^{T} C_i(\hat{x}_{i,t},\hat{x}_{-i,t}) - \mathop{\rm{min}}_{\tilde{x}_i \in \mathcal{X}_i} \sum_{t=1}^{T} C_i(\tilde{x}_i,\hat{x}_{-i,t}).
\end{align*}
An algorithm is said to be no-regret if its regret grows sub-linearly with the number of episodes $T$.
In this paper, we propose a no-regret learning algorithm, so that $\lim_{T\rightarrow \infty}\frac{{\rm{R}}_{C_i}(T)}{T} = 0$, $i=1,\ldots,N$.

\section{Preliminary Results}\label{sec:preliminary}
In this section, we summarize some results from \cite{wang2022risk} that lay the foundation for the subsequent analysis. Note that the CVaR values depend on the distributions of the cost functions, which are generally unknown. To estimate the distribution of the cost functions, and subsequently the CVaR values with only a few samples, a sampling strategy is proposed in \cite{wang2022risk} that uses a decreasing number of samples with the number of iterations. Specifically, at episode $t$, the sampling strategy used by the agents is designed as follows
\begin{align}\label{eq:sample_strategy}
    n_t = \lceil b  U^2 (T-t+1)^a \rceil,
\end{align}
where $\lceil \cdot \rceil$ is the regularized ceiling function, $T$ is the number of episodes, $U$ is the bound of $J_i$ as in Assumption \ref{assump:J_convex}, and $a,b \in(0,1)$ are parameters to be selected later. 

Using the cost evaluations at each episode, zeroth-order methods can be employed to estimate the CVaR gradients and then update the agents' actions. Specifically, at episode $t$, the agents calculate the number of samples $n_t$ according to \eqref{eq:sample_strategy}. Then, each agent perturbs the current action $x_{i,t}$ by an amount $\delta u_{i,t}$, where $u_{i,t} \in \mathbb{S}^{d_i}$ is a random perturbation direction sampled from the unit sphere $\mathbb{S}^{d_i}\subset \mathbb{R}^{d_i}$ and $\delta$ is the size of this perturbation. Next the agents play their perturbed actions $\hat{x}_{i,t}=x_{i,t}+\delta u_{i,t} $ for $n_t$ times, and obtain $n_t$ cost evaluations which are utilized to update their actions.
To facilitate the theoretical analysis, we define the $\delta$-smoothed function $C_i^{\delta}(x)= \mathbb{E}_{w_i \sim \mathbb{B}_i,u_{-i} \sim \mathbb{S}_{-i}}[C_i(x_i+\delta w_i,x_{-i}+\delta u_{-i})]$, where $\mathbb{S}_{-i}=\Pi_{j \neq i} \mathbb{S}_j$, and $\mathbb{B}_i$, $\mathbb{S}_i$ denote the unit ball and unit sphere in $\mathbb{R}^{d_i}$, respectively. The size of the perturbation $\delta$ here is related to a smoothing parameter that controls how well $C_i^{\delta}(x)$ approximates $C_i(x)$.
As shown in \cite{wang2022risk}, the function $C_i^{\delta}(x)$ satisfies the following properties.
\begin{lemma}\label{lemma:cdelta_property}
Let Assumptions \ref{assump:J_convex} and \ref{assump:J_Lips} hold. Then we have that
\begin{enumerate}
    \item $C_i^{\delta}(x_i,x_{-i})$ is convex in $x_i$,
    \item $C_i^{\delta}(x)$ is $L_0$-Lipschitz continuous in $x$,
    \item $|C_i^{\delta}(x)$-$C_i(x)|\leq \delta L_0 \sqrt{N}$,
    \item $\mathbb{E}[ \frac{d_i}{\delta} C_i(\hat{x}_t) u_{i,t}] = \nabla_i C_i^{\delta}(x_t)$.
\end{enumerate}
\end{lemma}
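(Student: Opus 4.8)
The plan is to establish the four properties in turn, dispatching the first three by elementary arguments that exploit the fact that $C_i^{\delta}$ is an average of translated copies of $C_i$, and reserving the real work for the fourth, the zeroth-order gradient identity. For property (1), I would fix $x_{-i}$ and a realization of the perturbation directions $(w_i,u_{-i})$, and observe that $x_i \mapsto C_i(x_i+\delta w_i, x_{-i}+\delta u_{-i})$ is convex, being the composition of the convex map $C_i(\cdot,x_{-i}+\delta u_{-i})$ from Lemma~\ref{lemma:cvar_property} with the translation $x_i \mapsto x_i + \delta w_i$. Taking the expectation over $(w_i,u_{-i})$ preserves convexity, since an expectation is a limit of convex combinations of convex functions.

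For property (2), I would invoke the $L_0$-Lipschitz continuity of $C_i$ from Lemma~\ref{lemma:cvar_property}. Writing $\zeta = \delta(w_i,u_{-i})$ for the joint perturbation, for any $x,x'$ I would bound $|C_i^{\delta}(x)-C_i^{\delta}(x')| \le \mathbb{E}[|C_i(x+\zeta)-C_i(x'+\zeta)|] \le L_0\|x-x'\|$ via Jensen's inequality and the pointwise Lipschitz bound. For property (3), the same bound gives $|C_i^{\delta}(x)-C_i(x)| \le \mathbb{E}[L_0\|\zeta\|]$, and the factor $\sqrt{N}$ emerges from $\|\zeta\|^2 = \delta^2(\|w_i\|^2 + \sum_{j\neq i}\|u_j\|^2) \le \delta^2 N$, since $\|w_i\|\le 1$ for $w_i \in \mathbb{B}_i$ and each sphere direction $u_j$ is a unit vector.

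The main obstacle is property (4). The plan is to reduce it to the single-block case by fixing the other agents' perturbations $u_{-i}$ and introducing the auxiliary function $g(x_i) = C_i(x_i, x_{-i}+\delta u_{-i})$, so that the inner expectation over $w_i \sim \mathbb{B}_i$ is exactly the ball-smoothing $g^{\delta}(x_i) = \mathbb{E}_{w_i \sim \mathbb{B}_i}[g(x_i+\delta w_i)]$. The key step is then the classical zeroth-order estimator identity $\nabla_{x_i} g^{\delta}(x_i) = \mathbb{E}_{u_i \sim \mathbb{S}_i}[\frac{d_i}{\delta} g(x_i+\delta u_i) u_i]$, proved by differentiating under the integral sign and applying the divergence (Stokes') theorem to convert the volume integral over $\mathbb{B}_i$ into a surface integral over $\mathbb{S}_i$; the boundedness of $C_i$ from Assumption~\ref{assump:J_convex} together with its Lipschitz regularity justifies the interchange of differentiation and integration. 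Taking the outer expectation over $u_{-i}$ and recognizing $\hat{x}_t = (x_{i,t}+\delta u_{i,t}, x_{-i,t}+\delta u_{-i,t})$ then yields the stated identity. I expect the Stokes-theorem step to be the only delicate point, the remaining properties being routine.
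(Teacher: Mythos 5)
Your proposal is correct and follows essentially the same route as the argument the paper relies on (it defers the proof to \cite{wang2022risk}, where properties (1)--(3) are obtained exactly by your translation/Jensen arguments and property (4) by the classical Flaxman--Kalai--McMahan sphere-sampling identity, i.e., differentiating the ball average and applying the divergence theorem blockwise in $x_i$ with $u_{-i}$ fixed). The only point worth making explicit is the final interchange $\nabla_{x_i}\mathbb{E}_{u_{-i}}[\cdot]=\mathbb{E}_{u_{-i}}[\nabla_{x_i}\cdot]$, which your Lipschitz bound already justifies via dominated convergence.
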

The last property in Lemma \ref{lemma:cdelta_property} shows that the term $\frac{d_i}{\delta} C_i(\hat{x}_t) u_{i,t}$ is an unbiased estimate of the gradient of the smoothed function $C_i^{\delta}(x)$. 
Next we present a lemma that helps bound the distance between two CVaR values by the distance between the two corresponding CDFs. The proof can be found in \cite{wang2022risk}.
\begin{lemma}\label{lemma:CVaR_bound}
Let $F$ and $G$ be two CDFs of two random variables and suppose the random variables are bounded by $U$. Then we have that
\begin{align*}
    |{\rm{CVaR}}_{\alpha}[F]-{\rm{CVaR}}_{\alpha}[G]| \leq \frac{U}{\alpha}  \mathop{\rm{sup}}_{y} |F(y)-G(y)|.
\end{align*}
\end{lemma}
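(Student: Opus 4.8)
The plan is to reduce the bound on the difference of two CVaR values to a uniform bound on the difference of two simple objective functions, by invoking the Rockafellar--Uryasev variational representation of CVaR. Writing $X$ for a random variable with CDF $F$, this representation states that
\begin{equation*}
{\rm{CVaR}}_{\alpha}[F] = \min_{\tau \in \mathbb{R}} \Big\{ \tau + \tfrac{1}{\alpha}\, \mathbb{E}_F\big[(X-\tau)^+\big] \Big\},
\end{equation*}
where $(z)^+ = \max\{z,0\}$ and the minimizer $\tau^\star$ is precisely the VaR $J^{\alpha}$. I would denote the inner objective by $g(\tau,F) := \tau + \tfrac{1}{\alpha}\mathbb{E}_F[(X-\tau)^+]$, so that ${\rm{CVaR}}_{\alpha}[F] = \min_\tau g(\tau,F)$, and analogously for $G$.

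First I would establish the elementary fact that the difference of two minimized functions is controlled by the uniform distance between them: letting $\tau_F^\star,\tau_G^\star$ be the respective minimizers, one has $g(\tau_F^\star,F) - g(\tau_G^\star,G) \le g(\tau_G^\star,F) - g(\tau_G^\star,G)$, and by symmetry $|{\rm{CVaR}}_{\alpha}[F] - {\rm{CVaR}}_{\alpha}[G]| \le \sup_\tau |g(\tau,F)-g(\tau,G)|$. Since the linear term $\tau$ cancels in the difference, this reduces the claim to bounding $\tfrac{1}{\alpha}\big|\mathbb{E}_F[(X-\tau)^+] - \mathbb{E}_G[(X-\tau)^+]\big|$ over the relevant range of $\tau$.

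Next I would rewrite each truncated expectation as a tail integral of the survival function via the layer-cake formula, $\mathbb{E}_F[(X-\tau)^+] = \int_\tau^{\infty}\big(1 - F(y)\big)\,dy$, and similarly for $G$. Subtracting, the constant terms cancel and I obtain $\mathbb{E}_F[(X-\tau)^+] - \mathbb{E}_G[(X-\tau)^+] = \int_\tau^{\infty}\big(G(y)-F(y)\big)\,dy$. Because the random variables are bounded by $U$, the integrand is supported on a bounded interval, and evaluating at the optimal threshold $\tau = \tau_G^\star = J^{\alpha}$ confines the effective integration to the upper tail $[J^{\alpha},U]$. Bounding the integrand pointwise by $\sup_y |F(y)-G(y)|$ then yields $\tfrac{1}{\alpha}(U - J^{\alpha})\sup_y|F(y)-G(y)|$.

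The main obstacle is this final quantitative step: securing the clean constant $U/\alpha$ rather than a looser multiple. This hinges on controlling the length $U - J^{\alpha}$ of the tail-integration window by $U$, which is where the boundedness assumption on the costs (and, depending on the sign convention, their nonnegativity placing the VaR threshold $J^{\alpha}$ within the support) enters to pin down the constant. Once the window length is bounded by $U$, the stated inequality follows immediately; the remainder is the routine verification of the variational identity and the layer-cake rewriting.
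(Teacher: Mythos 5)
Your route---the Rockafellar--Uryasev variational representation, the elementary bound $|\min_\tau g(\tau,F)-\min_\tau g(\tau,G)|\le \sup_\tau |g(\tau,F)-g(\tau,G)|$, and the layer-cake identity $\mathbb{E}_F[(X-\tau)^+]=\int_\tau^\infty (1-F(y))\,dy$---is the standard way to prove this kind of CVaR stability estimate, and it is essentially sound. Note, however, that this paper does not contain its own proof of the lemma: it defers entirely to the cited reference \cite{wang2022risk}, so there is no in-paper argument to compare against; your reconstruction is of the kind of argument that reference uses.

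The one substantive issue is exactly the point you flagged at the end and left conditional: the constant. Under this paper's Assumption~1 the costs satisfy only $|J_i|\le U$, so the VaR threshold can be as low as $-U$, your integration window has length up to $2U$, and your argument yields $2U/\alpha$, not $U/\alpha$. This is not a slack in the proof: for signed variables bounded by $U$ in absolute value, the constant $U/\alpha$ is genuinely false. Take $F$ to be the CDF of the point mass at $-U$ and $G$ the CDF of the mixture placing mass $1-\epsilon$ at $-U$ and mass $\epsilon$ at $U$, with $\epsilon<\alpha$. Then $\sup_y|F(y)-G(y)|=\epsilon$, while the variational formula gives ${\rm{CVaR}}_{\alpha}[F]=-U$ and ${\rm{CVaR}}_{\alpha}[G]=-U+2U\epsilon/\alpha$, so the ratio is exactly $2U/\alpha$. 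Consequently the lemma's phrase ``bounded by $U$'' must be read as values in $[0,U]$ (nonnegative costs). Under that reading your proof closes immediately, and in fact you do not even need to evaluate at the minimizer $\tau_G^\star$: since $F-G$ vanishes outside $[0,U]$, for \emph{every} $\tau$ one has
\begin{equation*}
|g(\tau,F)-g(\tau,G)| \;=\; \frac{1}{\alpha}\Big|\int_{\max(\tau,0)}^{U}\big(G(y)-F(y)\big)\,dy\Big| \;\le\; \frac{U}{\alpha}\,\sup_y |F(y)-G(y)|,
\end{equation*}
and the crude min-versus-min comparison finishes the proof. So: right approach, but the hinge you identified is real, and resolving it requires reading the boundedness hypothesis as nonnegativity rather than as the two-sided bound stated in Assumption~1.
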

% Based on these lemmas, \cite{wang2022risk} proposed three no-regret algorithms for risk-averse agents in convex games. In the subsequent section, we present a novel zeroth-order momentum algorithm that performs better than all the algorithms in \cite{wang2022risk}. 
\section{A zeroth-order momentum method}\label{sec:method}

In this section, we present a new one-point zeroth-order momentum method that combines sample reuse as in Algorithm 2 in \cite{wang2022risk} and residual feedback as in Algorithm 3 in \cite{wang2022risk}, but unlike Algorithm 2 in \cite{wang2022risk} that reuses samples only from the last iteration, it uses all past samples to update the agents' actions. The new algorithm is given as Algorithm 1.

\begin{algorithm}[t]
\caption{Risk-averse learning with momentum} \label{alg:algorithm1}
\begin{algorithmic}[1]
    \REQUIRE Initial value $x_0$, step size $\eta$, parameters $a$, $b$, $\delta$, $T$, risk level $\alpha_i$, $i=1,\cdots,N$.
    \FOR{$ {\rm{episode}} \; t=1,\ldots,T$}
        \STATE Select $n_t=\lceil b U^2 (T-t+1)^a\rceil$
        \STATE Each agent samples $u_{i,t} \in \mathbb{S}^{d_i}$,  $i=1,\ldots, N$
        \STATE Each agent play $\hat{x}_{i,t}=x_{i,t}+\delta u_{i,t} $, $i=1,\ldots, N$
        \FOR{$j=1,\ldots,n_t$}
        \STATE Let all agents play $\hat{x}_{i,t}$  
        \STATE Obtain $J_i(\hat{x}_{i,t},\hat{x}_{-i,t},\xi_{i}^j)$
        \ENDFOR
        \FOR{agent $ i=1,\ldots,N$}
        \STATE Build EDF $\bar{F}_{i,t}(y)$ 
        \STATE Estimate CVaR: $ {\rm{CVaR}}_{\alpha_i}[\bar{F}_{i,t}] $ 
        \STATE Construct gradient estimate\\ $\bar{g}_{i,t}=\frac{d_i}{\delta} \left( {\rm{CVaR}}_{\alpha_i} [\bar{F}_{i,t} ] -{\rm{CVaR}}_{\alpha_i} [\bar{F}_{i,t-1} ]\right) u_{i,t}$
        \STATE Update $x$: $x_{i,t+1} \leftarrow \mathcal{P}_{\mathcal{X}_i^{\delta}} ( x_{i,t} - \eta \bar{g}_{i,t})$
        \ENDFOR
    \ENDFOR
\end{algorithmic}
\end{algorithm}

Using the sampling strategy in \eqref{eq:sample_strategy}, each agent plays the perturbed action $\hat{x}_{i,t}$ for $n_t$ times and obtains $n_t$ samples at episode $t$.
For agent $i$, we denote the CDF of the random cost $J_i(\hat{x}_t,\xi_i)$ that is returned by the perturbed action $\hat{x}_t$ as $F_{i,t}^{\hat{x}_t}(y)= \mathbb{P}\{ J_i(\hat{x}_t,\xi_i) \leq y\}$. Since $\hat{x}_t$ depends on $t$, we write $F_{i,t}$ for $F_{i,t}^{\hat{x}_t}$ for ease of notation. 
Using bandit feedback in the form of finitely many cost evaluations, the agents cannot obtain the accurate CDF $F_{i,t}$. Instead, they construct an empirical distribution function (EDF) $\hat{F}_{i,t}$ of the cost $J_i(\hat{x}_t,\xi_i)$ using $n_t$ cost evaluations by
\begin{align}\label{eq:edf_alg1}
    \hat{F}_{i,t}(y)= \frac{1}{n_t}\sum_{j=1}^{n_t}\mathbf{1}\{ J_i(\hat{x}_t,\xi_i^j) \leq y\}.
\end{align}
To improve estimate of the CDF, we utilize past samples for all episodes $t\geq2$, and construct a modified distribution estimate $\bar{F}_{i,t}$ by adding a momentum term:
\begin{align}\label{eq:edf_bar}
    \bar{F}_{i,t}(y)= \beta \bar{F}_{i,t-1}(y) + (1-\beta) \hat{F}_{i,t}(y),
\end{align}
where $\beta\in[0,1)$ is the momentum parameter.
For $t=1$, we set $\bar{F}_{i,t}(y)= \hat{F}_{i,t}(y)$.
The agents utilize the distribution estimate $\bar{F}_{i,t}$ to calculate the CVaR estimates and further construct the gradient estimate using residual feedback as 
\begin{align}\label{eq:grad_alg1}
    \bar{g}_{i,t}=\frac{d_i}{\delta} \left( {\rm{CVaR}}_{\alpha_i} [\bar{F}_{i,t} ] -{\rm{CVaR}}_{\alpha_i} [\bar{F}_{i,t-1} ]\right) u_{i,t},
\end{align}
where $\delta$ is the size of the perturbation on the action $x_{i,t}$ defined above.
Depending on the size of the perturbation, the played action $\hat{x}_{i,t}$ may be infeasible, i.e., $\hat{x}_{i,t} \not\in \mathcal{X}_i$. To handle this issue, we define the projection set $\mathcal{X}_i^{\delta} =\{x_i \in \mathcal{X}_i | {\rm{dist}}(x_i,\partial \mathcal{X}_i) \geq \delta \}$ that we use in the projected gradient-descent update
\begin{align}\label{eq:projection}
    x_{i,t+1} = \mathcal{P}_{\mathcal{X}_i^{\delta}} ( x_{i,t} - \eta \bar{g}_{i,t}).
\end{align}

Note that the agents are not able to obtain accurate values of $C_i(\hat{x}_t)$ using finite samples. In fact, if we use the distribution estimate $\bar{F}_{i,t}$ in \eqref{eq:edf_bar} to calculate the CVaR values, there will be a CVaR estimation error, which is defined as 
\begin{align*}
    \bar{\varepsilon}_{i,t} := 
     {\rm{CVaR}}_{\alpha_i}[\bar{F}_{i,t} ]- {\rm{CVaR}}_{\alpha_i}[F_{i,t} ].
\end{align*} 
As a result, the gradient estimate $\bar{g}_{i,t}$ in \eqref{eq:grad_alg1} is biased since $\mathbb{E}[\bar{g}_{i,t}]=\mathbb{E}\left[ \frac{d_i}{\delta}(C_i(\hat{x}_t)+ \bar{\varepsilon}_{i,t})u_{i,t} \right]= \nabla_i C_i^{\delta}(x_t)+ \mathbb{E}\left[\frac{d_i}{\delta}\bar{\varepsilon}_{i,t}u_{i,t} \right]$, with the bias captured by the last term. 
In what follows, we present a lemma that bounds the sum of the CVaR estimation errors.
\begin{lemma}\label{lemma:sum_epsilon_bound}
Suppose that Assumption \ref{assump_CDF_lips} holds. Then,
the sum of the CVaR estimation errors satisfies
\begin{align}
    \sum_{t=1}^T \left\|\bar{\varepsilon}_{i,t}\right\|\leq \frac{U}{\alpha_i } \Big( \frac{\beta}{1-\beta} e_{i,1} + \frac{\beta\Sigma_1}{1-\beta}T  + \sum_{t=1}^T r_t \Big)
\end{align}
with probability at least $1-\gamma$, where $e_{i,t}:=\mathop{\rm{sup}}_{y} |\bar{F}_{i,t}(y)-F_{i,t}(y)|$, $r_t:=\sqrt{\frac{ {\rm{ln}}(2T/\gamma) }{2bU^2(T-t+1)^a}}$, and $\Sigma_1:=C_0 \eta \frac{d_i}{\delta}U\sqrt{N} + 2 C_0 \delta$.
\end{lemma}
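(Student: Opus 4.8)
The plan is to reduce the statement to a bound on the uniform (Kolmogorov) distance $e_{i,t}=\sup_y|\bar F_{i,t}(y)-F_{i,t}(y)|$ between the momentum-averaged estimate and the true CDF, and then to control $\sum_{t=1}^T e_{i,t}$ by unrolling a one-step recursion. First I would invoke Lemma~\ref{lemma:CVaR_bound}: since $\bar F_{i,t}$ is a convex combination of empirical CDFs of costs bounded by $U$ (hence itself a valid CDF supported on $[-U,U]$), applying the lemma with $F=\bar F_{i,t}$, $G=F_{i,t}$ gives $\|\bar\varepsilon_{i,t}\|\le \frac{U}{\alpha_i}e_{i,t}$. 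It therefore suffices to show $\sum_{t=1}^T e_{i,t}\le \frac{\beta}{1-\beta}e_{i,1}+\frac{\beta\Sigma_1}{1-\beta}T+\sum_{t=1}^T r_t$, and multiply through by $U/\alpha_i$ at the end.

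The core step is a recursion for $e_{i,t}$. Using the momentum update \eqref{eq:edf_bar} and adding and subtracting $\beta F_{i,t-1}(y)$ and $\beta F_{i,t}(y)$, I would write
\begin{align*}
\bar F_{i,t}(y)-F_{i,t}(y) &= \beta\big(\bar F_{i,t-1}(y)-F_{i,t-1}(y)\big) \\
&\quad + \beta\big(F_{i,t-1}(y)-F_{i,t}(y)\big) \\
&\quad + (1-\beta)\big(\hat F_{i,t}(y)-F_{i,t}(y)\big).
\end{align*}
Taking $\sup_y$ and the triangle inequality yields
\begin{align*}
e_{i,t}\le \beta e_{i,t-1} + \beta\sup_y|F_{i,t-1}(y)-F_{i,t}(y)| + (1-\beta)\sup_y|\hat F_{i,t}(y)-F_{i,t}(y)|.
\end{align*}
The first term recycles the previous estimation error; the remaining two are bounded separately.

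For the CDF-variation term I would use Assumption~\ref{assump_CDF_lips} to get $\sup_y|F_{i,t-1}(y)-F_{i,t}(y)|\le C_0\|\hat x_{t-1}-\hat x_t\|$. Since $\hat x_t=x_t+\delta u_t$, I split this as $C_0(\|x_{t-1}-x_t\|+\delta\|u_{t-1}-u_t\|)$. The iterate displacement is controlled via the non-expansiveness of the projection in \eqref{eq:projection} together with a uniform bound on the residual-feedback gradient \eqref{eq:grad_alg1}, namely $\|\bar g_{i,t}\|\le \frac{d_i}{\delta}U$ (from $|J_i|\le U$ and $\|u_{i,t}\|=1$), which after concatenation over the $N$ agents gives $\|x_{t-1}-x_t\|\le \eta\frac{d_i}{\delta}U\sqrt N$; the perturbation change contributes $2C_0\delta$ since each $u_{i,t}$ is a unit vector. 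Collecting these reproduces the constant $\Sigma_1=C_0\eta\frac{d_i}{\delta}U\sqrt N+2C_0\delta$. For the empirical-distribution term I would apply the Dvoretzky--Kiefer--Wolfowitz inequality to the $n_t$ i.i.d.\ samples drawn at episode $t$: with probability at least $1-\gamma/T$, $\sup_y|\hat F_{i,t}(y)-F_{i,t}(y)|\le \sqrt{\ln(2T/\gamma)/(2n_t)}\le r_t$, using $n_t\ge bU^2(T-t+1)^a$ from \eqref{eq:sample_strategy}. A union bound over $t=1,\dots,T$ makes all these events hold simultaneously with probability at least $1-\gamma$, which is where the high-probability qualifier enters.

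Finally I would unroll $e_{i,t}\le \beta e_{i,t-1}+\beta\Sigma_1+(1-\beta)r_t$ for $t\ge2$ to obtain $e_{i,t}\le \beta^{t-1}e_{i,1}+\sum_{k=2}^t\beta^{t-k}\big(\beta\Sigma_1+(1-\beta)r_k\big)$, then sum over $t$. Treating the $t=1$ term separately and bounding $e_{i,1}\le r_1$ by DKW, the geometric sums $\sum_{t\ge2}\beta^{t-1}\le\frac{\beta}{1-\beta}$ and $\sum_{t\ge k}\beta^{t-k}\le\frac{1}{1-\beta}$ collapse the three contributions into exactly $\frac{\beta}{1-\beta}e_{i,1}$, $\frac{\beta\Sigma_1}{1-\beta}(T-1)\le\frac{\beta\Sigma_1}{1-\beta}T$, and $\sum_{t=1}^T r_t$, which is the claimed bound after multiplying by $U/\alpha_i$. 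The step I expect to be the main obstacle is the CDF-variation bound: relating the change in the true CDFs across consecutive episodes to the algorithm's action update requires carefully combining Assumption~\ref{assump_CDF_lips}, the projection's non-expansiveness, and the uniform gradient bound, and it is here that one must track precisely whether the factor $\beta$ or $1-\beta$ multiplies each accumulated error so that the geometric sums telescope to the stated constants.
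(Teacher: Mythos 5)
Your proposal is correct and follows essentially the same route as the paper's proof: the same three-term decomposition of $\bar F_{i,t}-F_{i,t}$, the same use of Assumption~\ref{assump_CDF_lips} together with the projection non-expansiveness and the uniform bound $\|\bar g_t\|\le \sqrt{N} d_i U/\delta$ to get $\Sigma_1$, the DKW inequality with a union bound for the empirical term, and the unrolled geometric recursion, with Lemma~\ref{lemma:CVaR_bound} converting the CDF bound into the CVaR bound. If anything, your treatment of the final summation (handling $t=1$ separately and using $e_{i,1}\le r_1$ to recover exactly the coefficient $\tfrac{\beta}{1-\beta}$ on $e_{i,1}$) is more careful than the paper's terse "taking the sum" step.
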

\begin{proof}
To bound the CVaR estimation error, it suffices to bound the CDF differences using Lemma \ref{lemma:CVaR_bound}. Recall the definition of $\bar{F}_{i,t}$ in \eqref{eq:edf_bar}. Then, we have that 
\begin{align}\label{eq_term7}
    &\mathop{\rm{sup}}_{y} |\bar{F}_{i,t}(y)-F_{i,t}(y)| \nonumber \\
    =&  \mathop{\rm{sup}}_{y} |\beta \bar{F}_{i,t-1}(y) + (1-\beta) \hat{F}_{i,t}(y)- F_{i,t}(y)|\nonumber \\
    =& \mathop{\rm{sup}}_{y} | \beta (\bar{F}_{i,t-1}(y)- {F}_{i,t-1}(y)) + \beta ({F}_{i,t-1}(y)- {F}_{i,t}(y)) \nonumber\\
    &+ (1-\beta) (\hat{F}_{i,t}(y)-{F}_{i,t}(y)) |\nonumber\\
    \leq & \beta \mathop{\rm{sup}}_{y} |\bar{F}_{i,t-1}(y)- {F}_{i,t-1}(y)| + \beta \mathop{\rm{sup}}_{y} |{F}_{i,t-1}(y)- {F}_{i,t}(y)| \nonumber\\
     &+ (1-\beta) \mathop{\rm{sup}}_{y} |\hat{F}_{i,t}(y)-{F}_{i,t}(y)|.
\end{align}
We now bound the latter two terms in the right-hand-side of \eqref{eq_term7} separately. 
Using Assumption \ref{assump_CDF_lips} and Lemma \ref{lemma:cdelta_property}, we have that 
\begin{align}\label{eq_term8}
    &\mathop{\rm{sup}}_{y} |F_{i,t}(y)-F_{i,t-1}(y)| \nonumber \\
    \leq & C_0 \left\|\hat{x}_t - \hat{x}_{t-1} \right\| 
    \leq  C_0 \left\|x_t - x_{t-1} \right\| + 2 C_0 \delta \nonumber\\
    \leq & C_0 \eta \left\| \bar{g}_{t} \right\| + 2 C_0 \delta 
    \leq C_0 \eta \frac{d_i}{\delta}U\sqrt{N} + 2 C_0 \delta:=\Sigma_1,
\end{align}
where the last inequality holds since $\left\|\bar{g}_{t}\right\| =\sqrt{ \sum_{i=1}^N\left\|\bar{g}_{i,t}\right\|^2} \leq \frac{\sqrt{N} d_i U}{\delta}$.
Then, applying the Dvoretzky–Kiefer–Wolfowitz (DKW) inequality, we obtain that
\begin{align}\label{event_At}
    \mathbb{P}\left\{ \mathop{\rm{sup}}_{y} |F_{i,t}(y)-\hat{F}_{i,t}(y)| \geq \sqrt{\frac{ {\rm{ln}}(2/\bar{\gamma}) }{2n_t}}\right\} \leq \bar{\gamma}.
\end{align}
Define the events in (\ref{event_At}) as $A_t$ and denote $\gamma=\bar{\gamma}T$. Then,  the following inequality holds
\begin{align}\label{eq_term10}
\mathop{\rm{sup}}_{y} |F_{i,t}(y)-\hat{F}_{i,t}(y)| \leq \sqrt{\frac{ {\rm{ln}}(2T/\gamma) }{2n_t}}, \forall t=1,\ldots,T,
\end{align}
with probability at least $1-\gamma$ since $1-\mathbb{P} \{ \bigcup_{t=1}^T A_t\} \geq 1- \sum_{t=1}^T\mathbb{P}\{A_t\}\geq 1-T \frac{\gamma}{T}\geq1-\gamma$. 
Substituting the bounds in \eqref{eq_term8} and \eqref{eq_term10} into \eqref{eq_term7}, and using the definition of $e_{i,t}$, we get that
\begin{align}\label{eq:one_step_sum}
    e_{i,t}\leq \beta e_{i,t-1}+ \beta \Sigma_1 + (1-\beta)r_t.
\end{align}
Note that \eqref{eq:one_step_sum} holds for all $t\in\{2,\cdots,T\}$. By iteratively using this inequality, we have that 
\begin{align}\label{eq:eit}
    e_{i,t}\leq \frac{\beta}{1-\beta}\Sigma_1 + (1-\beta)\sum_{k=0}^{k-2}\beta^{k}r_{t-k}+ \beta^{t-1} e_{1,i}.
\end{align}
Taking the sum over $t=1,\ldots,T$ of both sides of \eqref{eq:eit}, we have that
\begin{align}
    \sum_{t=1}^T e_{i,t}\leq \frac{\beta}{1-\beta} e_{i,1} + \frac{\beta\Sigma_1}{1-\beta}T  + \sum_{t=1}^T r_t,
\end{align}
with probability at least $1-\gamma$.
Finally, using Lemma \ref{lemma:CVaR_bound}, we can obtain the desired result. 
\end{proof}

Lemma \ref{lemma:sum_epsilon_bound} bounds the CVaR estimation error caused by using past samples.
The next lemma quantifies the error due to residual feedback in the gradient estimate \eqref{eq:grad_alg1}. Specifically, it bounds the second moment of the gradient estimate.
\begin{lemma}\label{lemma:sum_gt}
Let Assumptions \ref{assump:J_convex}, \ref{assump:J_Lips} and \ref{assump_CDF_lips} hold. Suppose that the action is updated as in \eqref{eq:projection}. Then, the second moment of the gradient estimate satisfies
\begin{align}\label{eq:gt}
    \sum_{t=1}^T \left\|\bar{g}_{t}\right\|^2 
    \leq \frac{1}{1-\sigma} \left\| \bar{g}_{1}\right\|^2 + \frac{1}{1-\sigma}\Sigma_2 T,
\end{align}
where $\bar{g}_t:=(\bar{g}_{i,t}, \bar{g}_{-i,t})$ is the concatenated vector of all agents' gradient estimates,  $\sigma=\frac{4d_i^2L_0^2N \eta^2}{\delta^2}$, $\Sigma_2=16d_i^2N^2L_0^2 + \frac{48C_0^2 d_i^4 U^4 N \eta^2 \beta^2}{\delta^4 (1-\beta)^2} (\sum_i \frac{1}{\alpha_i^2}) + \frac{192 C_0^2 d_i^2 U^2 \beta^2}{ (1-\beta)^2}(\sum_i \frac{1}{\alpha_i^2}) + \frac{12 \ln(2T/\gamma) d_i^2}{b\delta^2}(\sum_i \frac{1}{\alpha_i^2}) + \frac{24 d_i^2 U^2 \beta^2 e_m^2}{\delta^2}(\sum_i \frac{1}{\alpha_i^2}) $, and
$e_m:= \mathop{\max}_i \{e_{i,1}\}$.
\end{lemma}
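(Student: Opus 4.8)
The plan is to convert a per-episode bound on $\|\bar g_t\|^2$ into a self-referential recursion in $t$ whose geometric structure produces the $\frac{1}{1-\sigma}$ prefactor, and to collect every remaining per-episode contribution into the constant $\Sigma_2$. Because $u_{i,t}\in\mathbb S^{d_i}$ is a unit vector, \eqref{eq:grad_alg1} gives $\|\bar g_{i,t}\|^2=\frac{d_i^2}{\delta^2}\big({\rm{CVaR}}_{\alpha_i}[\bar F_{i,t}]-{\rm{CVaR}}_{\alpha_i}[\bar F_{i,t-1}]\big)^2$, hence $\|\bar g_t\|^2=\sum_i\frac{d_i^2}{\delta^2}(\cdot)^2$. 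I would first insert $\pm{\rm{CVaR}}_{\alpha_i}[F_{i,t}]$ and $\pm{\rm{CVaR}}_{\alpha_i}[F_{i,t-1}]$ and regroup, splitting each CVaR increment into a \emph{true} part $C_i(\hat x_t)-C_i(\hat x_{t-1})$ (since ${\rm{CVaR}}_{\alpha_i}[F_{i,t}]=C_i(\hat x_t)$) and an \emph{estimation-error} part $\bar\varepsilon_{i,t}-\bar\varepsilon_{i,t-1}$, and then apply $(a+b)^2\le 2a^2+2b^2$ to handle the two parts separately.

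For the true part I would use that $C_i$ is $L_0$-Lipschitz (Lemma~\ref{lemma:cvar_property}), so that $(C_i(\hat x_t)-C_i(\hat x_{t-1}))^2\le L_0^2\|\hat x_t-\hat x_{t-1}\|^2$. Writing $\hat x=x+\delta u$ with $\|u_t\|=\sqrt N$ gives $\|\hat x_t-\hat x_{t-1}\|\le\|x_t-x_{t-1}\|+2\delta\sqrt N$, and the nonexpansiveness of the per-agent projection in \eqref{eq:projection} (using that each $x_{i,t-1}\in\mathcal X_i^\delta$) gives $\|x_t-x_{t-1}\|\le\eta\|\bar g_{t-1}\|$. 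Expanding the square once more reproduces exactly the recursive term $\sigma\|\bar g_{t-1}\|^2$ with $\sigma=\frac{4d_i^2L_0^2N\eta^2}{\delta^2}$ together with the constant $16d_i^2N^2L_0^2$, which is the first summand of $\Sigma_2$.

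The estimation-error part carries the real work. I would bound $\bar\varepsilon_{i,t}^2\le\frac{U^2}{\alpha_i^2}e_{i,t}^2$ via Lemma~\ref{lemma:CVaR_bound} and then substitute the recursive estimate \eqref{eq:eit} for $e_{i,t}$ that was already established inside the proof of Lemma~\ref{lemma:sum_epsilon_bound}. Squaring that three-term bound with $(a+b+c)^2\le 3(a^2+b^2+c^2)$ and summing over $t$ yields three pieces: a $\Sigma_1^2$ piece that is directly proportional to $T$ (expanding $\Sigma_1=C_0\eta\frac{d_i}{\delta}U\sqrt N+2C_0\delta$ produces the second and third summands of $\Sigma_2$); the convolution piece $\big(\sum_k\beta^kr_{t-k}\big)^2$; and the decaying initial-condition piece $\beta^{2(t-1)}e_{i,1}^2$. \textbf{The main obstacle is controlling the convolution piece.} Here I would apply a weighted Cauchy--Schwarz, $\big(\sum_k\beta^kr_{t-k}\big)^2\le\frac{1}{1-\beta}\sum_k\beta^kr_{t-k}^2$, then interchange the $t$- and $k$-summations so that each $r_s^2$ picks up a weight $\sum_k\beta^k=\frac{1}{1-\beta}$, and finally bound $r_t^2$ by its maximal value $\frac{\ln(2T/\gamma)}{2bU^2}$ attained at $t=T$; this gives the fourth summand of $\Sigma_2$. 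For the initial-condition piece, restricting the error decomposition to $t\ge2$ lets me use $\beta^{2(t-1)}\le\beta^2$ and $e_{i,1}\le e_m$, which reproduces the fifth summand. As in Lemma~\ref{lemma:sum_epsilon_bound}, all of these CDF bounds hold only on the DKW event \eqref{eq_term10}, so the conclusion holds with probability at least $1-\gamma$.

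Finally I would write the per-episode inequality as $\|\bar g_t\|^2\le\sigma\|\bar g_{t-1}\|^2+c_t$ for $t\ge2$, where $\sum_{t=2}^T c_t\le\Sigma_2 T$ by the two previous paragraphs, sum over $t=2,\ldots,T$, add $\|\bar g_1\|^2$ to both sides, and shift the index on the recursive term so that $\sum_{t=2}^T\sigma\|\bar g_{t-1}\|^2\le\sigma\sum_{t=1}^T\|\bar g_t\|^2$ can be absorbed onto the left-hand side. Dividing by $1-\sigma$, which requires $\sigma<1$ and hence a sufficiently small step size $\eta$, yields \eqref{eq:gt}.
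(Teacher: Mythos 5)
Your proposal is correct and follows essentially the same route as the paper's proof: the same splitting of the CVaR increment into a Lipschitz term $2L_0^2\|\hat x_t-\hat x_{t-1}\|^2$ plus estimation errors $4|\bar\varepsilon_{i,t}|^2+4|\bar\varepsilon_{i,t-1}|^2$, the same substitution of the $e_{i,t}$ recursion from Lemma~\ref{lemma:sum_epsilon_bound} squared via $(a+b+c)^2\le 3(a^2+b^2+c^2)$, and the same per-episode recursion $\|\bar g_t\|^2\le\sigma\|\bar g_{t-1}\|^2+\Sigma_2$ solved with the $\frac{1}{1-\sigma}$ factor. Your only deviations are cosmetic --- the paper bounds the convolution term by replacing each $r_{t-k}$ with its maximum $r$ and summing the geometric series directly rather than via weighted Cauchy--Schwarz, and it unrolls the recursion instead of absorbing the $\sigma$-term onto the left-hand side --- and both yield identical constants in $\Sigma_2$.
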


\begin{proof}
Using the fact that $r_t\leq \sqrt{\frac{ {\rm{ln}}(2T/\gamma) }{2bU^2}}:=r$ and  \eqref{eq:eit}, we have that
\begin{align}\label{eq:eit_ineq}
    \left\| e_{i,t} \right\|^2 & \leq \left\| \frac{\beta}{1-\beta}\Sigma_1 + r+ \beta e_{m} \right\|^2  \nonumber\\
    & \leq 3\left\| \frac{\beta}{1-\beta}\Sigma_1 \right\|^2 + 3\left\| r\right\|^2  + 3\left\| \beta e_{m}\right\|^2,
\end{align}
where the last inequality holds since $\left\| a+b+c\right\|^2 \leq 3 \left\|a\right\|^2+3 \left\|b\right\|^2+3 \left\|c\right\|^2$. Then, applying Lemma \ref{lemma:CVaR_bound} to \eqref{eq:eit_ineq}, we obtain that
\begin{align}\label{eq:epsibar_bound}
    \left\|\bar{\varepsilon}_{i,t}\right\|^2 \leq \frac{U^2}{\alpha_i^2}\left( \frac{3\beta^2  \Sigma_1^2}{(1-\beta)^2} + 3 \left\| r\right\|^2+ 3\beta^2 \left\| e_m\right\|^2\right).
\end{align}
By the definition of $\bar{g}_{i,t}$ in \eqref{eq:grad_alg1}, we have that
\begin{align}    \label{eq_bar_git}
    &\left\|\bar{g}_{i,t}\right\|^2  
    =\frac{d_i^2}{\delta^2} \left( ({\rm{CVaR}}_{\alpha_i} [\bar{F}_{i,t} ]- {\rm{CVaR}}_{\alpha_i} [\bar{F}_{i,t-1} ] ) u_{i,t}\right)^2 \nonumber \\
    &\leq \frac{d_i^2}{\delta^2} \left({\rm{CVaR}}_{\alpha_i} [\bar{F}_{i,t} ]- {\rm{CVaR}}_{\alpha_i} [\bar{F}_{i,t-1} ] \right)^2 \left\| u_{i,t}\right\|^2 \nonumber \\
    &\leq \frac{d_i^2}{\delta^2} \Big( 2 ({\rm{CVaR}}_{\alpha_i} [F_{i,t} ]- {\rm{CVaR}}_{\alpha_i} [F_{i,t-1} ])^2  \nonumber \\
    &\quad + 2 (\bar{\varepsilon}_{i,t}-\bar{\varepsilon}_{i,t-1})^2 \Big)  \left\| u_{i,t}\right\|^2 \nonumber \\
    & \leq \frac{d_i^2}{\delta^2} \Big( 2 L_0^2 \left\|\hat{x}_t-\hat{x}_{t-1} \right\|^2 + 4|\bar{\varepsilon}_{i,t} |^2 + 4|\bar{\varepsilon}_{i,t-1} |^2 \Big),
\end{align}
where the last inequality is due to the fact that the CVaR function is Lipschitz continuous as shown in Lemma \ref{lemma:cvar_property}. We now bound the term $\left\|\hat{x}_t-\hat{x}_{t-1} \right\|^2$ in \eqref{eq_bar_git}. Recalling the update rule $x_{i,t}= \mathcal{P}_{\mathcal{X}_i^{\delta}} ( x_{i,t-1} - \eta_{i} \bar{g}_{i,t-1})$, we have that 
\begin{align*}
    &\left\|\hat{x}_t-\hat{x}_{t-1} \right\|^2  
    =  \left\| x_t-x_{t-1} +\delta u_t-\delta u_{t-1} \right\|^2  \nonumber \\
    \leq& 2 \left\| x_t - x_{t-1}\right\|^2 +2\delta^2(2\left\|u_t  \right\|^2 + 2\left\|u_{t-1} \right\|^2 )  \nonumber \\
    \leq& 2  \left\| x_t - x_{t-1}\right\|^2 + 8 N\delta^2
    \leq 2 \eta^2 \left\|\bar{g}_{t-1} \right\|^2 + 8 N\delta^2. \nonumber 
\end{align*}
Substituting the above inequality and the inequality in \eqref{eq:epsibar_bound} into the right hand side of \eqref{eq_bar_git} and summing both sides over all agents $i=1,\ldots,N$, we have that 
\begin{align}\label{eq_sum_of_g}
    &\left\|\bar{g}_{t}\right\|^2 =\sum_{i=1}^N \left\|\bar{g}_{i,t}\right\|^2 \nonumber\\
    &\leq \frac{d_i^2 N }{\delta^2} \Big( 4L_0^2 \eta^2 \left\|\bar{g}_{t-1} \right\|^2 + 16 L_0^2N\delta^2 \Big) \nonumber \\
    &\quad + \frac{8 d_i^2 }{\delta^2} \sum_{i=1}^N \frac{U^2}{\alpha_i^2}\left( \frac{3\beta^2  \Sigma_1^2}{(1-\beta)^2} + 3 \left\| r\right\|^2+ 3\beta^2 \left\| e_m\right\|^2\right) \nonumber \\
    &\leq \sigma \left\| \bar{g}_{t-1}\right\|^2 + 16d_i^2N^2L_0^2+ \frac{12 \ln(2T/\gamma) d_i^2}{b\delta^2} (\sum_i \frac{1}{\alpha_i^2}) \nonumber \\
    & \quad + \frac{24d_i^2 U^2\beta^2  }{\delta^2 (1-\beta)^2} (C_0 \eta \frac{d_i}{\delta}U\sqrt{N} + 2 C_0 \delta)^2 (\sum_i \frac{1}{\alpha_i^2}) \nonumber \\
    &\quad +\frac{24 d_i^2 U^2 \beta^2 e_m^2}{\delta^2}(\sum_i \frac{1}{\alpha_i^2}) \nonumber \\
    & \leq \sigma \left\| \bar{g}_{t-1}\right\|^2 + \Sigma_2,
\end{align}
where the last inequality is due to the fact that $(C_0 \eta \frac{d_i}{\delta}U\sqrt{N} + 2 C_0 \delta)^2\leq 2C_0^2\eta^2\frac{d_i^2}{\delta^2}U^2 N + 8C_0^2\delta^2$. Summing \eqref{eq_sum_of_g} over all episode $t=1,\ldots,T$ completes the proof. 
\end{proof}

Before proving the main theorem, we present a regret decomposition lemma that links the regret to the errors bounds on the CVaR estimates and gradient estimates as provided in Lemmas \ref{lemma:sum_epsilon_bound} and \ref{lemma:sum_gt}.
\begin{lemma}\label{lemma:regret_decomp}
Let Assumptions \ref{assump:J_convex}, \ref{assump:J_Lips} and \ref{assump_CDF_lips}  hold. Then, the regret of Algorithm \ref{alg:algorithm1} satisfies 
\begin{align}\label{eq:regret_decomp}
    {\rm{R}}_{C_i}(T) \leq &\frac{D_x^2}{2\eta}  +\frac{ \eta}{2}\mathbb{E} \big[ \sum_{t=1}^{T} \left\|\bar{g}_{i,t}\right\|^2 \big]   + (4 \sqrt{N}+\Omega)L_0\delta T \nonumber \\ 
    &+ \frac{d_i D_x }{\delta}\sum_{t=1}^T|\bar{\varepsilon}_{i,t}|,
\end{align}
where $\Omega>0$ is a constant that represents the error from projection $\mathcal{P}$.
\end{lemma}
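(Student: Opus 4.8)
The plan is to reduce the true regret, which Algorithm~\ref{alg:algorithm1} only controls through the smoothed surrogate $C_i^\delta$, then run a standard projected online-gradient-descent argument on $C_i^\delta$, and finally account separately for (i) the bias of the residual-feedback gradient estimate and (ii) the geometric gaps created by the perturbations and by projecting onto the shrunken set $\mathcal{X}_i^\delta$.

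First I would relate the realized costs $C_i(\hat{x}_t)$ and the hindsight comparator $C_i(\tilde{x}_i^{\star},\hat{x}_{-i,t})$ to $C_i^\delta$. Using that $C_i$ and $C_i^\delta$ are $L_0$-Lipschitz (Lemma~\ref{lemma:cvar_property} and property~2 of Lemma~\ref{lemma:cdelta_property}), that $|C_i^\delta-C_i|\le \delta L_0\sqrt{N}$ (property~3 of Lemma~\ref{lemma:cdelta_property}), and that $\|\hat{x}_t-x_t\|=\delta\sqrt{N}$ and $\|\hat{x}_{-i,t}-x_{-i,t}\|=\delta\sqrt{N-1}$ for unit-sphere perturbations, I get $C_i(\hat{x}_t)\le C_i^\delta(x_t)+2\delta L_0\sqrt{N}$ and, after replacing the unconstrained minimizer $\tilde{x}_i^{\star}\in\mathcal{X}_i$ by its projection $\tilde{x}_i'\in\mathcal{X}_i^\delta$, $C_i^\delta(\tilde{x}_i',x_{-i,t})\le C_i(\tilde{x}_i^{\star},\hat{x}_{-i,t})+2\delta L_0\sqrt{N}+L_0\|\tilde{x}_i'-\tilde{x}_i^{\star}\|$. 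Collecting the perturbation terms yields the $4\sqrt{N}\,L_0\delta T$ contribution, while $L_0\sum_{t}\|\tilde{x}_i'-\tilde{x}_i^{\star}\|\le \Omega L_0\delta T$ is exactly the projection error that $\Omega$ absorbs.

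Next I would apply the one-step projected-descent inequality. Since $\mathcal{P}_{\mathcal{X}_i^\delta}$ is nonexpansive and $\tilde{x}_i'\in\mathcal{X}_i^\delta$, expanding $\|x_{i,t+1}-\tilde{x}_i'\|^2\le\|x_{i,t}-\eta\bar{g}_{i,t}-\tilde{x}_i'\|^2$ gives $\langle\bar{g}_{i,t},x_{i,t}-\tilde{x}_i'\rangle\le\frac{\|x_{i,t}-\tilde{x}_i'\|^2-\|x_{i,t+1}-\tilde{x}_i'\|^2}{2\eta}+\frac{\eta}{2}\|\bar{g}_{i,t}\|^2$, which telescopes to $\frac{D_x^2}{2\eta}+\frac{\eta}{2}\sum_{t}\|\bar{g}_{i,t}\|^2$. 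Taking expectation conditioned on the history $\mathcal{F}_t$ up to the start of episode $t$ (before $u_{i,t}$ is drawn), the residual-feedback term $\frac{d_i}{\delta}{\rm{CVaR}}_{\alpha_i}[\bar{F}_{i,t-1}]u_{i,t}$ has zero conditional mean because ${\rm{CVaR}}_{\alpha_i}[\bar{F}_{i,t-1}]$ is $\mathcal{F}_t$-measurable and $\mathbb{E}[u_{i,t}\mid\mathcal{F}_t]=0$; together with property~4 of Lemma~\ref{lemma:cdelta_property} this gives $\mathbb{E}[\bar{g}_{i,t}\mid\mathcal{F}_t]=\nabla_i C_i^\delta(x_t)+\frac{d_i}{\delta}\mathbb{E}[\bar{\varepsilon}_{i,t}u_{i,t}\mid\mathcal{F}_t]$. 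Convexity of $C_i^\delta$ in $x_i$ (property~1 of Lemma~\ref{lemma:cdelta_property}) then lower-bounds $\langle\nabla_i C_i^\delta(x_t),x_{i,t}-\tilde{x}_i'\rangle$ by $C_i^\delta(x_{i,t},x_{-i,t})-C_i^\delta(\tilde{x}_i',x_{-i,t})$, converting the descent inequality into a bound on the surrogate regret increment.

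Finally, the leftover bias inner product $\langle\frac{d_i}{\delta}\mathbb{E}[\bar{\varepsilon}_{i,t}u_{i,t}],x_{i,t}-\tilde{x}_i'\rangle$ is controlled by Cauchy--Schwarz using $\|u_{i,t}\|=1$ and $\|x_{i,t}-\tilde{x}_i'\|\le D_x$, producing the $\frac{d_i D_x}{\delta}\sum_{t}|\bar{\varepsilon}_{i,t}|$ term; assembling this with the surrogate-to-true conversion and the telescoped descent bound yields the claim. I expect the main obstacle to be the bookkeeping around the shrunken feasible set and the perturbations: one must verify that the hindsight optimum can be moved into $\mathcal{X}_i^\delta$ at $O(\delta)$ cost so that $\Omega$ is a finite geometric constant, and track carefully which coordinates are perturbed ($x_t$ versus $\hat{x}_t$, and $x_{-i,t}$ versus $\hat{x}_{-i,t}$) so that every Lipschitz gap lands inside the $(4\sqrt{N}+\Omega)L_0\delta T$ coefficient rather than leaking into a term that does not vanish as $\delta\to 0$. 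The descent and bias steps are routine once the filtration is fixed so the zero-mean property of $u_{i,t}$ and property~4 of Lemma~\ref{lemma:cdelta_property} can be invoked.
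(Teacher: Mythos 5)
Your proposal is correct and follows essentially the route the paper intends: the paper itself omits the proof (deferring to Lemma~5 of \cite{wang2022risk}), and that argument is precisely the standard decomposition you reconstruct --- reduction to the smoothed surrogate $C_i^{\delta}$, projected online gradient descent with telescoping giving $\frac{D_x^2}{2\eta}+\frac{\eta}{2}\sum_t\|\bar{g}_{i,t}\|^2$, conditional unbiasedness of the residual-feedback term via $\mathbb{E}[u_{i,t}\mid\mathcal{F}_t]=0$ together with property~4 of Lemma~\ref{lemma:cdelta_property}, and Cauchy--Schwarz on the CVaR-estimation bias yielding $\frac{d_iD_x}{\delta}\sum_t|\bar{\varepsilon}_{i,t}|$, with the perturbation and shrunken-set bookkeeping producing exactly the $(4\sqrt{N}+\Omega)L_0\delta T$ term. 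Your accounting of each of the four terms matches the statement, so the reconstruction is sound.
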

\begin{proof}
The proof can be adapted from Lemma 5 in \cite{wang2022risk} and is omitted due to the space limitations. 
\end{proof}
\begin{theorem}\label{thm:1}
Let Assumptions \ref{assump:J_convex}, \ref{assump:J_Lips} and \ref{assump_CDF_lips}  hold and select $\eta=\frac{D_x  }{d_i L_0 N}T^{-\frac{3a}{4}}$,  $\delta=\frac{D_x }{ N^{\frac{1}{6}}}T^{-\frac{a}{4}}$, $\beta=\frac{1}{ U^2  T^{\frac{a}{4}} } $. 
Suppose that $n_t$ is chosen as in \eqref{eq:sample_strategy} with $a\in(0,1)$, and the EDF and the gradient estimate are defined as in \eqref{eq:edf_alg1} and (\ref{eq:grad_alg1}), respectively. Then, when $T\geq (8N^{\frac{2}{3}} )^{\frac{1}{a}}$, Algorithm \ref{alg:algorithm1} achieves regret
$$
{\rm{R}}_{C_i}(T)= \mathcal{O}( D_x d_i L_0 NS(\alpha) \ln(T/\gamma) T^{1-\frac{a}{4}} ),
$$
with probability at least $1-\gamma$ and where $S(\alpha):=\sum_{i=1}^N\frac{1}{\alpha_i^2}$.
\end{theorem}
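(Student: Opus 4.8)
The plan is to assemble the three preparatory lemmas into the regret decomposition of Lemma~\ref{lemma:regret_decomp} and then substitute the prescribed values of $\eta$, $\delta$, and $\beta$, tracking the dependence on $T$ term by term. Concretely, I would bound the four summands on the right-hand side of \eqref{eq:regret_decomp} separately: the deterministic term $\frac{D_x^2}{2\eta}$; the second-moment term $\frac{\eta}{2}\mathbb{E}[\sum_t\|\bar{g}_{i,t}\|^2]$, controlled via Lemma~\ref{lemma:sum_gt} after noting $\|\bar{g}_{i,t}\|^2\le\|\bar{g}_t\|^2$; the smoothing-bias term $(4\sqrt{N}+\Omega)L_0\delta T$; and the CVaR-error term $\frac{d_iD_x}{\delta}\sum_t|\bar{\varepsilon}_{i,t}|$, controlled via Lemma~\ref{lemma:sum_epsilon_bound}. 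Since Lemmas~\ref{lemma:sum_epsilon_bound} and \ref{lemma:sum_gt} both hold on the single DKW union-bound event of probability at least $1-\gamma$ used in \eqref{event_At}--\eqref{eq_term10}, the final estimate will inherit that same confidence level.

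The first step I would carry out is a feasibility check for Lemma~\ref{lemma:sum_gt}: its geometric recursion only telescopes when $\sigma<1$. Plugging in $\eta$ and $\delta$ gives $\sigma=\frac{4d_i^2L_0^2N\eta^2}{\delta^2}=4N^{-2/3}T^{-a}$, so the hypothesis $T\ge(8N^{2/3})^{1/a}$ forces $\sigma\le\tfrac12$ and hence $\frac{1}{1-\sigma}\le2$. The same threshold keeps $\beta=\frac{1}{U^2T^{a/4}}$ bounded away from $1$, so the factors $\frac{1}{1-\beta}$ and $\frac{\beta}{1-\beta}$ appearing in both lemmas are $O(1)$. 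With these reductions, Lemma~\ref{lemma:sum_gt} yields $\sum_t\|\bar{g}_{i,t}\|^2\le 2\|\bar{g}_1\|^2+2\Sigma_2T$, and I would identify the dominant piece of $\Sigma_2$ as the Dvoretzky--Kiefer--Wolfowitz term $\frac{12\ln(2T/\gamma)d_i^2}{b\delta^2}S(\alpha)=O(\ln(T/\gamma)\,T^{a/2})$, the remaining summands of $\Sigma_2$ being either constant or decaying in $T$, and the leading $\|\bar{g}_1\|^2$ contribution being dominated after multiplication by $\eta$.

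The core of the argument is the rate bookkeeping. Substituting the parameters, the deterministic term becomes $\frac{D_x^2}{2\eta}=\frac{D_xd_iL_0N}{2}T^{3a/4}$, which I bound by $\frac{D_xd_iL_0N}{2}T^{1-a/4}$ using $a\le1$; this is precisely what produces the $D_xd_iL_0N$ prefactor in the stated bound. For the second-moment term, multiplying $\eta\sim T^{-3a/4}$ against $\Sigma_2T\sim\ln(T/\gamma)\,T^{1+a/2}$ gives an $O(\ln(T/\gamma)\,T^{1-a/4})$ contribution carrying the $S(\alpha)$ factor. The smoothing term is $O(L_0\delta T)=O(T^{1-a/4})$ directly from $\delta\sim T^{-a/4}$. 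For the CVaR-error term, Lemma~\ref{lemma:sum_epsilon_bound} gives $\sum_t|\bar{\varepsilon}_{i,t}|=O(T^{1-a/2})$ up to logs---the dominant pieces being $\frac{\beta\Sigma_1}{1-\beta}T$ and $\sum_t r_t$, where $\sum_t r_t=O(\sqrt{\ln(T/\gamma)}\,T^{1-a/2})$ follows from the integral comparison $\sum_{k=1}^T k^{-a/2}=O(T^{1-a/2})$---and multiplying by $\frac{d_iD_x}{\delta}\sim T^{a/4}$ again yields rate $T^{1-a/4}$. Summing the four contributions and collapsing the problem constants into a single prefactor gives the claimed $\mathcal{O}(D_xd_iL_0NS(\alpha)\ln(T/\gamma)T^{1-a/4})$.

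I expect the main obstacle to be the interplay between the two coupled error recursions rather than any single estimate: the gradient second moment in Lemma~\ref{lemma:sum_gt} depends on the CVaR error through $\Sigma_2$, while the CVaR error in Lemma~\ref{lemma:sum_epsilon_bound} is controlled through the gradient-norm bound $\|\bar{g}_t\|\le\frac{\sqrt{N}d_iU}{\delta}$ entering $\Sigma_1$. The delicate point is choosing $(\eta,\delta,\beta)$ so that the momentum-induced bias terms (those scaling like $\frac{\beta}{1-\beta}$) decay fast enough not to dominate, while the residual-feedback variance stays contractive ($\sigma<1$); verifying that every one of the sub-rates $T^{3a/4}$, $T^{a/2}$, and $T^{1-a/2}$ is simultaneously dominated by $T^{1-a/4}$ for all $a\in(0,1)$ is where the specific exponents $-3a/4$, $-a/4$, and $-a/4$ in $\eta$, $\delta$, and the $T$-dependent momentum $\beta$ were reverse-engineered, and is the step most prone to error.
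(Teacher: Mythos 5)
Your proposal follows exactly the paper's route: substitute the bounds of Lemma~\ref{lemma:sum_epsilon_bound} and Lemma~\ref{lemma:sum_gt} into the regret decomposition of Lemma~\ref{lemma:regret_decomp}, then plug in the prescribed $\eta$, $\delta$, $\beta$ and track rates. The paper declares this final bookkeeping ``straightforward'' and omits it, whereas you actually carry it out (including the $\sigma\le\tfrac12$ feasibility check from $T\ge(8N^{2/3})^{1/a}$ and the identification of the DKW term as the dominant piece of $\Sigma_2$), and your term-by-term rate accounting is correct.
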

\begin{proof}
Substituting the bounds in Lemmas \ref{lemma:sum_epsilon_bound} and \ref{lemma:sum_gt} into Lemma \ref{lemma:regret_decomp}, we have that
\begin{align}
    {\rm{R}}_{C_i}(T) \leq &\frac{D_x^2}{2\eta}  +\frac{ \eta}{2} \left( \frac{1}{1-\sigma} \left\| \bar{g}_{1}\right\|^2 + \frac{1}{1-\sigma}\Sigma_2 T \right)
      \nonumber \\ 
    &+ \frac{d_i D_x U}{\delta \alpha_i} \Big( \frac{\beta}{1-\beta} e_{i,1} + \frac{\beta\Sigma_1}{1-\beta}T  + \sum_{t=1}^T r_t \Big) \nonumber \\
    & + (4 \sqrt{N}+\Omega)L_0\delta T.
\end{align}
Substituting $\delta$, $\eta$ and $\beta$ into the above inequality, we can obtain the desired result. The detailed proof is straightforward and is omitted due to the space limitations. 
\end{proof}
Note that Algorithm \ref{alg:algorithm1} in fact achieves the regret  
${\rm{R}}_{C_i}(T) =\tilde{\mathcal{O}} (T^{1-\frac{a}{4}})$, in which we use the notation $\tilde{\mathcal{O}}$ to hide constant factors and poly-logarithmic factors of $T$. This result matches the best result in \cite{wang2022risk}. Compared to \cite{wang2022risk}, the analysis of Algorithm 1 that combines historical information and residual feedback is nontrivial and requires an appropriate selection of the momentum parameter to show convergence. Note that Algorithm 3 in \cite{wang2022risk} is a special case of Algorithm 1 for the momentum parameter $\beta=0$.

\section{Numerical Experiments}\label{sec:simulation}
In this section, we illustrate the proposed algorithm on a Cournot game. Specifically, we consider two risk-averse agents $i=1,2$ that compete with each other. Each agent determines the production level $x_i$ and receives an individual cost feedback which is given by $J_i=1-(2-\sum_jx_j)x_i+ 0.2x_i+\xi_i x_i$, where $\xi_i\sim U(0,1)$ is a uniform random variable. Here we utilize the term $\xi_i x_i$ to represent the uncertainty occurred in the market, which is proportional to the production level $x_i$. The agents have their own risk levels $\alpha_i$ and they aim to minimize the risk of incurring high costs, i.e., the CVaR of their cost functions. 

Note that the regret for agent $i$ depends on the sequence of the other agent's actions $\{x_{-i,t}\}_{t=1}^T$, and this sequence depends on the algorithm. Therefore, it is not appropriate to compare different algorithms in terms of the regret. 
Instead, we use the empirical performance to evaluate the algorithms. Specifically, we compute the CVaR values at each episode, and observe how fast the agents can minimize the CVaR values during the learning process.

To compute the distribution function $\bar{F}_{i,t}$, we divide the interval $[0,U]$ into $n$ bins of equal width, and we approximate the expectation by the sum of finite terms.
We compare our zeroth-order momentum method with the two algorithms in \cite{wang2022risk}, which we term as the algorithm with sample reuse and the algorithm with residual feedback, respectively. 
The number of samples $n_t$ at each episode for Algorithm~1 is shown in Figure~\ref{fig_samples}. We set the momentum parameter $\beta=0.5$.
Each algorithm is run for 20 trials and the parameters of these algorithms are separately optimally tuned. The CVaR values of these algorithms are presented in Figure \ref{fig_momen}. We observe that all the algorithms finally converge to the same CVaR values, but the zeroth-order momentum method proposed here converges faster. This is because our method uses all past samples to get a better estimate of the CVaR values and thus allows for a larger step size. Moreover, we also observe that the standard deviation of both our proposed algorithm and the algorithm with residual feedback is sufficiently small, which verifies the variance reduction effect of using residual feedback.

\begin{figure}[t] 
\begin{center}
\centerline{\includegraphics[width=0.9\columnwidth]{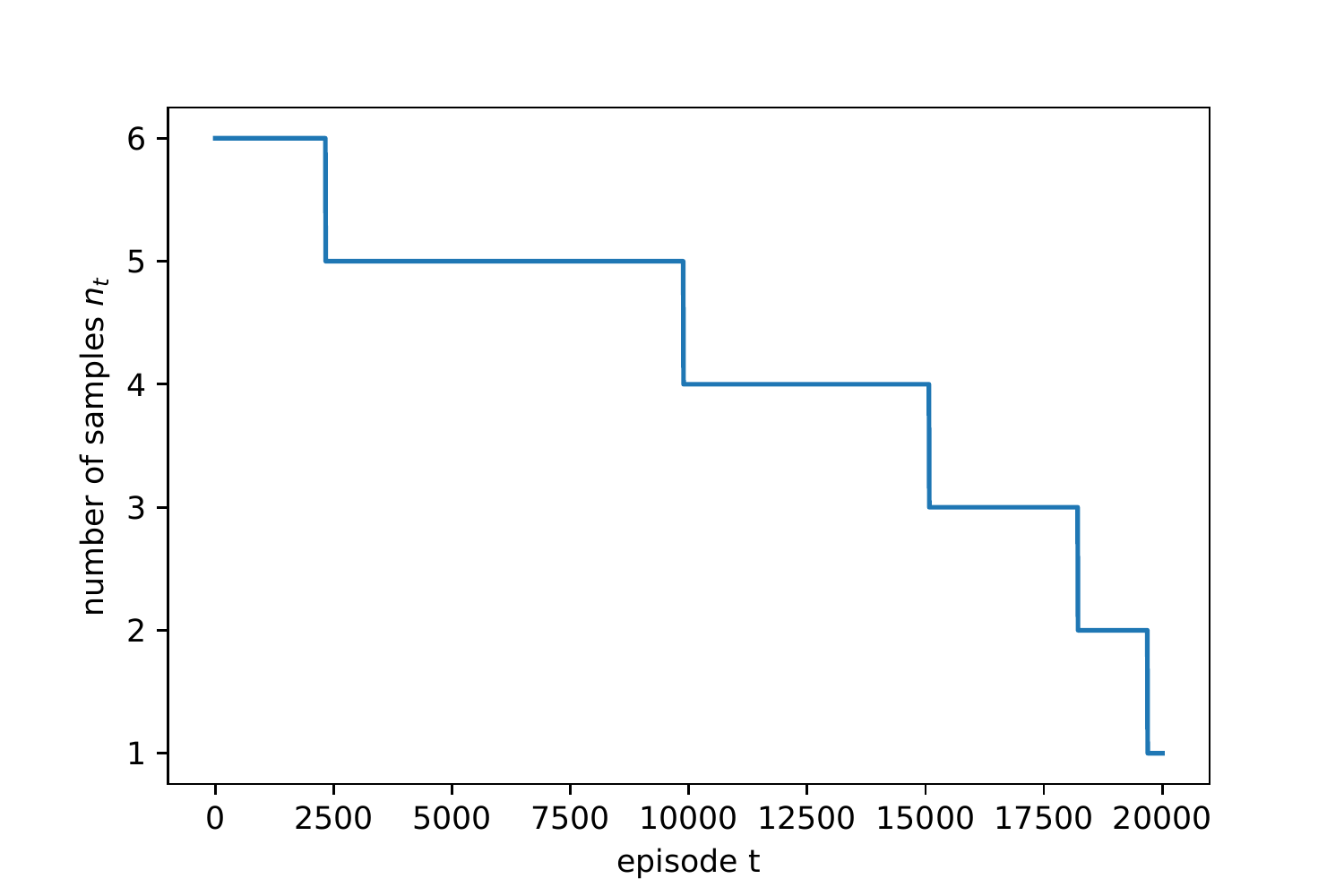}}
\caption{The number of samples of Algorithm \ref{alg:algorithm1}.}
\label{fig_samples}
\end{center}
\vskip -0.2in
\end{figure}

\begin{figure}[t]
\begin{center}
\centerline{\includegraphics[width=0.9\columnwidth]{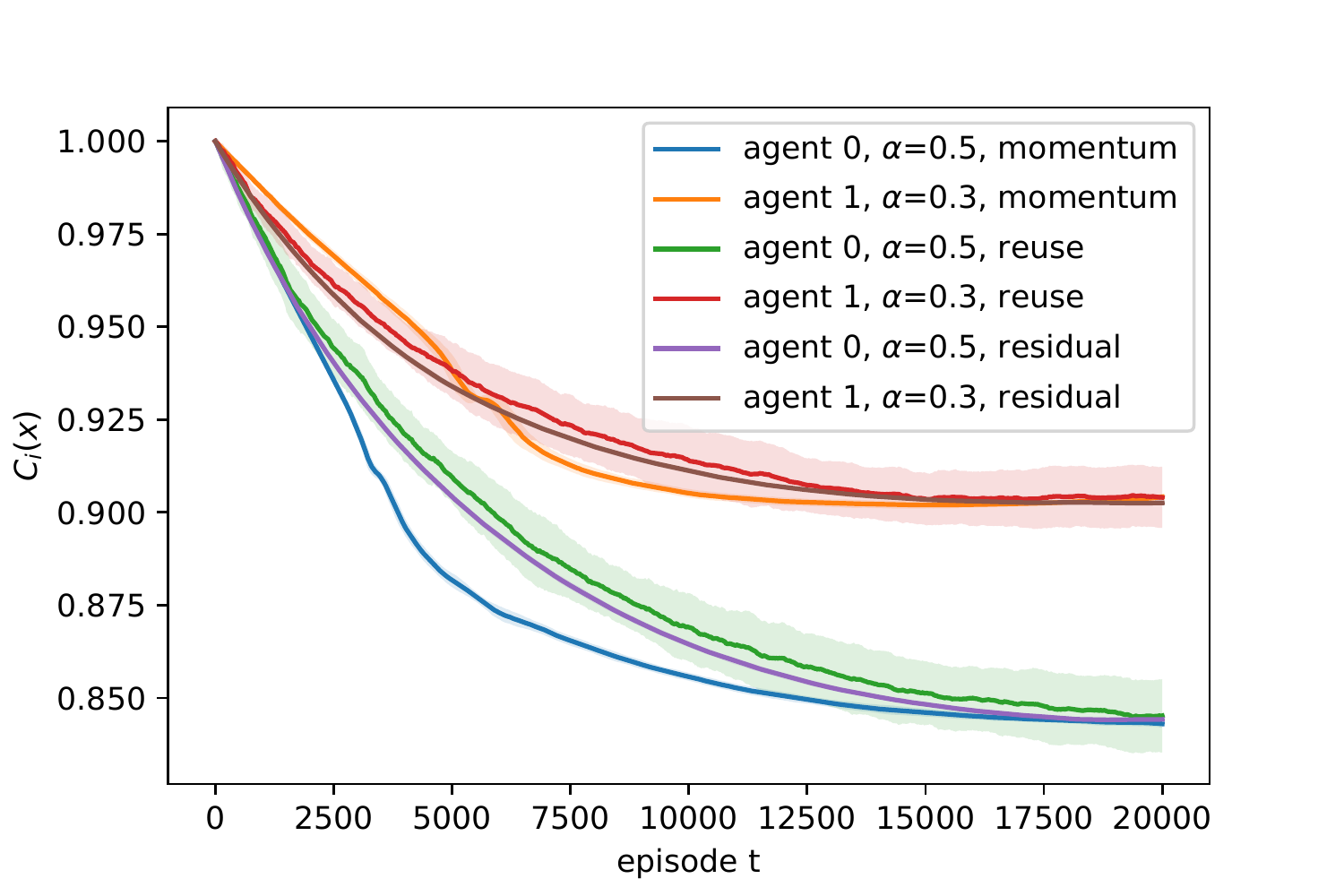}}
\caption{Comparative results among our proposed zeroth-order momentum method, and the three algorithms in \cite{wang2022risk}. Shaded areas represent $\pm$ one standard deviation over 20 runs.}
\label{fig_momen}
\end{center}
\vskip -0.2in
\end{figure}

\begin{figure}[t]
\begin{center}
\centerline{\includegraphics[width=0.9\columnwidth]{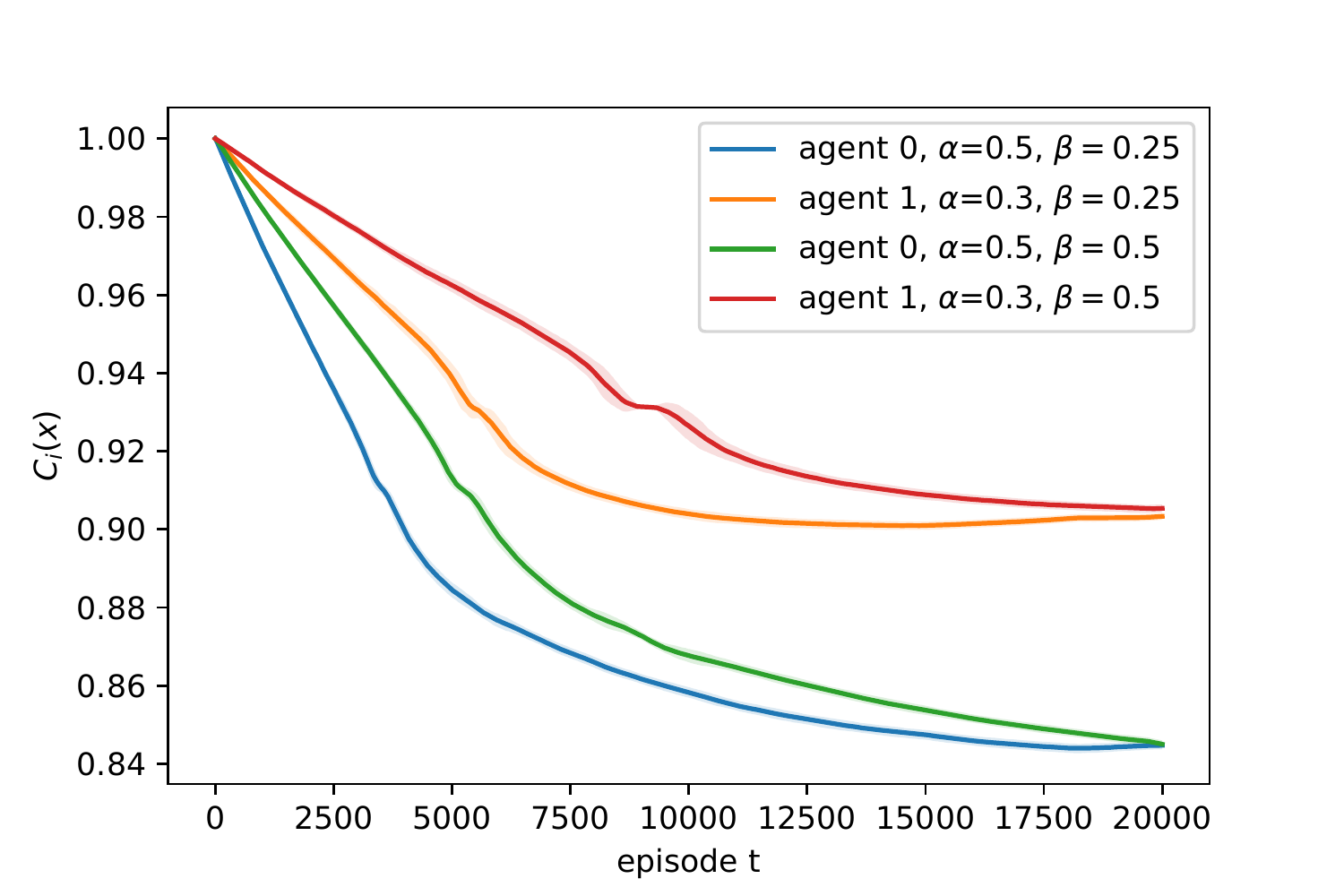}}
\caption{CVaR values achieved by Algorithm \ref{alg:algorithm1} with different $\beta$ values. Shaded areas represent $\pm$ one standard deviation over 20 runs.}
\label{fig_vari_beta}
\end{center}
\vskip -0.2in
\end{figure}

Moreover, we explore the effect of different momentum parameters $\beta$. In Figure \ref{fig_vari_beta}, we present the achieved CVaR values of our method for various $\beta$ values and all other parameters unchanged. Recalling the definition of the distribution estimate in \eqref{eq:edf_bar}, large values of $\beta$ place more weight on outdated cost values and thus reduce the convergence speed.

\section{Conclusion}\label{sec:conclusion}
In this work, we proposed a zeroth-order momentum method for online convex games with risk-averse agents. 
The use of momentum that employs past samples allowed to improve the ability of the algorithm to estimate the CVaR values. Zeroth-order estimation of the CVaR gradients using residual feedback allowed us to reduce the variance of the CVaR gradient estimates. 
We showed that the proposed algorithm theoretically achieves no-regret learning with high probability, matching the best known result in literature.
Moreover, we provided numerical simulations that demonstrated the superior performance of our method in practice. While sample reuse and residual feedback had been both separately shown to improve the performance of online learning, here we showed that their combination yields yet better performance.

\bibliography{ref}
\bibliographystyle{IEEEtran}

\end{document}